\documentclass[letterpaper, 10 pt, conference]{ieeeconf}
\IEEEoverridecommandlockouts
\usepackage{cite}
\usepackage{amsmath,amssymb,amsfonts}
\DeclareMathAlphabet{\mathpzc}{OT1}{pzc}{m}{it}
\usepackage{algorithmic}
\usepackage{graphicx}
\usepackage{textcomp}
\usepackage{xcolor}
\usepackage{changes}
\usepackage{algorithm}
\usepackage{subcaption} 
\usepackage{float}

\usepackage{titlesec}
\titlespacing*{\subsection}{0pt}{\baselineskip}{\baselineskip}

\def\BibTeX{{\rm B\kern-.05em{\sc i\kern-.025em b}\kern-.08em
    T\kern-.1667em\lower.7ex\hbox{E}\kern-.125emX}}

\newtheorem{lemma}{Lemma}
\newtheorem{theorem}{Theorem}
\newtheorem{definition}{Definition}

\newtheorem{proposition}{Proposition}

\title{\LARGE \bf
Generalized Advantage Estimation for Distributional Policy Gradients
}
\author{
Shahil Shaik, Jonathon M. Smereka, and Yue Wang
\thanks{Shahil Shaik and Yue Wang are with the Mechanical Engineering Department, Clemson University; Jonathon M. Smereka is with the Ground Vehicle Systems Center U.S. Army CCDC. This work was supported by the Automotive Research Center (ARC) at the University of Michigan, Ann Arbor, under
Cooperative Agreement W56HZV-24-2-0001 with the US Army DEVCOM Ground Vehicle Systems Center (GVSC).
DISTRIBUTION A. Approved for public release; distribution unlimited. OPSEC $\#$9027
}%
}

\begin{document}
\maketitle




\begin{abstract}
Generalized Advantage Estimation (GAE) has been used to mitigate the computational complexity of reinforcement learning (RL) by 
employing an exponentially weighted estimation of the advantage function to reduce the variance in policy gradient estimates. Despite its effectiveness, GAE is not designed to handle value distributions integral to distributional RL, which can capture the inherent stochasticity in systems and is hence more robust to system noises. To address this gap, we propose a novel approach that utilizes the optimal transport theory to introduce a Wasserstein-like directional metric, which measures both the distance and the directional discrepancies between probability distributions. Using the exponentially weighted estimation, we leverage this Wasserstein-like directional metric to derive distributional GAE (DGAE). Similar to traditional GAE, our proposed DGAE provides a low-variance advantage estimate with controlled bias, making it well-suited for policy gradient algorithms that rely on advantage estimation for policy updates. We integrated DGAE into three different policy gradient methods. Algorithms were evaluated across various OpenAI Gym environments and compared with the baselines with traditional GAE to assess the performance.
\end{abstract}

\begin{keywords}
Generalized Advantage Estimation, Optimal Transport, Wasserstein-like Directional Metric, Exponentially-weighted Estimation, Policy Gradient, Distributional Reinforcement Learning 
\end{keywords}

\section{Introduction}

Reinforcement Learning (RL) has shown  significant promise in sequential decision-making, where agents interact with an environment to learn optimal decisions by maximizing expected returns. This approach has successfully addressed complex problems such as robotics and autonomous systems characterized by continuous state-action spaces. Policy gradient methods, which directly parameterize and optimize policies, excel in these settings by supporting continuous actions, stochasticity, and stable convergence \cite{Lillicrap}. 

A central challenge in policy gradient methods is balancing bias and variance in gradient estimation. Stochastic gradient ascent algorithms utilize a noisy yet unbiased estimate of the gradient of the expected return; however, the variance of this estimate increases unfavorably with the time horizon \cite{Schulman}, \cite{Sutton2}. 
To address this, actor-critic algorithms use a value function to reduce the variance of the gradient estimate, albeit at the cost of introducing some bias \cite{Sutton},\cite{Mnih}. 
Generalized Advantage Estimation (GAE) mitigates this trade-off by proposing a class of policy gradient estimators that reduce the variance of policy gradient estimates while maintaining a tolerable level of bias~\cite{Schulman}. However, this method relies on scalar value functions and cannot be directly applied to value distributions, a key aspect of distributional RL. 

Unlike traditional RL, distributional RL \cite{Bellemare} models complete return distributions instead of expected values, capturing environmental randomness and enabling robust policy learning in high-noise, dynamic settings. One of the key tools in distributional RL is the Wasserstein distance, a metric originally derived from optimal transport (OT) \cite{Bellemare,Santambrogio}. The Wasserstein distance quantifies the distance between two probability distributions. 
Its continuous and almost everywhere differentiable nature makes it particularly valuable for gradient-based learning, as these properties ensure meaningful learning curves even when dealing with distributions that have non-overlapping supports \cite{Arjovsky}. 
In particular, distributional RL algorithms leverage the Wasserstein distance to learn the value distributions by minimizing the distance between the predicted return distribution and its Bellman target \cite{Bellemare,Li,Dabney}. However, the Wasserstein metric lacks the ability to define the superiority between the distributions being compared. Hence, this approach cannot directly estimate the advantage function—a critical component for guiding policy updates in policy gradient methods, as it quantifies the relative benefit of a specific action compared to the policy's default action. We will need a metric to measure the distance and superiority between distributions to define the advantage function.

Our work bridges the aforementioned gaps by introducing a Wasserstein-like directional metric capable of measuring both the distance and superiority between distributions. Leveraging this novel metric, we define the distributional temporal difference (TD) error and propose an $n$-step advantage estimator that quantifies the discrepancy between the $n$-step return distribution and the baseline value distribution. Building on this, we introduce distributional GAE (DGAE), which computes an exponentially weighted average of the $n$-step advantages. This approach optimizes the bias-variance tradeoff by combining shorter horizon returns (lower bias, emphasizing immediate rewards) with longer horizon returns (lower variance, incorporating richer distributional information). As a result, we derive a low-bias, low-variance advantage estimator tailored for distributional policy gradient algorithms, enabling more stable and efficient optimization.


The rest of the paper is organized as follows. Section \ref{sec:preliminaries} introduces the key preliminaries and background information necessary to understand the proposed approach. Section \ref{sec:main_section} introduces the proposed Wasserstein-like directional metric and develops the DGAE based on this directional metric. This section also discusses integrating our approach with baseline policy gradient algorithms and value distribution learning. Section \ref{sec:results} describes the simulation setup and experimental results. Finally, Section \ref{sec:conclusion} concludes the paper with a summary of contributions.

\section{Preliminaries}\label{sec:preliminaries}

In this paper, we will use {$\mathbb{E}_{(\cdot)}$} to represent the expectation with respect to a random variable. 
We consider an  Markov Decision Process (MDP) with continuous state and action spaces represented by a tuple $(\mathcal{S}, \mathcal{A}, r, P, \gamma)$ where 
    $\mathcal{S}$ is the continuous state space, $\mathcal{A}$ is the continuous action space, $r: \mathcal{S} \times \mathcal{A} \rightarrow \left[r_{\min},r_{\max}\right]$ 
    is the reward function, mapping state-action pairs $(s_t, a_t), s_t\in \mathcal{S}, a_t\in\mathcal{A},$ to a bounded set of rewards $\left[r_{\min},r_{\max}\right]$, $P(s_{t+1}|s_t,a_t): \mathcal{S} \times \mathcal{A} \rightarrow \mathcal{P}_s$ 
     is the unknown probability density that governs the transition dynamics with $\mathcal{P}_s$ representing the state probability density set, 
     and $\gamma \in (0,1)$ is the discount factor that balances immediate and future rewards~\cite{Haarnoja}.
    We will use $\pi$ to denote a stochastic policy $\pi:\mathcal{S}\rightarrow \mathcal{P}_a$, 
    where $\mathcal{P}_a$ is the action probability distribution set. 



\subsection{Policy Gradients}

Policy gradient methods aim to directly optimize the policy to maximize the expected total return, i.e., by optimizing the objective $J(\pi_{\theta}) {= \mathbb{E}_{\pi_{\theta}}\left[\sum_{k=0}^{\infty} \gamma^k r_k \right]}$, where $\theta$ represents the policy’s parameters~\cite{Sutton2}. There are several different related forms of the objective function gradient used to update the policy parameters, and it is commonly expressed as 
\begin{equation}\label{eq:gradient_obj_fcn}
\nabla_\theta J(\pi_\theta) = \mathbb{E}_{\pi_\theta}\left[\sum_{k=0}^{\infty} \psi_k \nabla_\theta \log \pi_\theta(a_k | s_k)\right]
\end{equation}
Here, $\psi_k$ is some signal that can guide the policy training, for example, the discounted return $\sum_{k=0}^{\infty} \gamma^k r(s_k,a_k)$, the discounted state-action value function $Q^{\gamma, \pi_\theta}(s_t,a_t)$, the advantage function {$\mathpzc{A}^{\gamma, \pi_\theta}(s_t,a_t)$}, etc. The advantage function $\mathpzc{A}^{\gamma,\pi_\theta}(s_t,a_t) {\triangleq} Q^{\gamma,\pi_\theta}(s_t,a_t)-V^{\gamma,\pi_\theta}(s_t)$, if known, yields the lowest possible variance among these choices \cite{Schulman}. 


It is important to note that the true value functions $Q^{\gamma,\pi_\theta}, V^{\gamma,\pi_\theta}$ are usually unknown and are approximated {using} neural networks. Hence, the true advantage function $\mathpzc{A}^{\gamma,\pi_\theta}$ is also unknown and needs to be approximated. The advantage estimators will be represented as $\hat{\mathpzc{A}}^{(\cdot)}$ in this paper.

\subsection{Generalized Advantage Estimator (GAE)}
A common challenge in policy gradient methods is the issue of high variance in the policy gradient estimation, which usually results in unstable learning and slow convergence. GAE was proposed to mitigate this issue by reducing the variance in gradient estimation while introducing a controllable amount of bias \cite{Schulman}. It modifies the traditional advantage estimation by incorporating a temporal smoothing mechanism, 
utilizing the {exponentially} weighted average of the $k$-step estimators. Mathematically, {GAE} is given by
\begin{equation}\label{eq:GAE}  
{\hat{\mathpzc{A}}^{\gamma,\lambda}_{\text{GAE}}(s_t,a_t)} = \sum_{k=0}^{\infty}(\gamma\lambda)^k \delta(s_{t+k}, a_{t+k})
\end{equation}
where 
\begin{equation}
    \delta(s_t,a_t) = r_t(s_t,a_t) + \gamma V(s_{t+1}) - V(s_t)
\end{equation} 
is the TD error, and $\lambda \in (0,1)$ is a control parameter. The parameters $\gamma$ and $\lambda$ contribute towards bias-variance tradeoff \cite{Schulman}. This low-bias, low-variance advantage estimator $\hat{\mathpzc{A}}^{\gamma,\lambda}$ can be used to approximate the advantage in Eq. (\ref{eq:gradient_obj_fcn}) to guide the policy updates.

\subsection{Optimal Transport Theory}

A fundamental problem while working with probability distributions is defining a metric to measure the discrepancy between distributions. 
According to the optimal transport theory, the optimal transport cost between two distributions can be computed by solving the Kantorovich problem (KP) defined as follows:
\begin{definition}
    Given probability distributions $\mu\in{\mathcal{P}_U}$ and $\nu\in{\mathcal{P}_V}$ of random variables $U,V$, respectively, and a cost function $c:U\times V\rightarrow \mathbb{R}$, the {Kantorovitch problem (KP)} is given by:
    \begin{equation}
        \inf{\left\{K(\Gamma)\triangleq\int_{U\times V}c{(U,V)}d\Gamma : \Gamma\in\Pi(\mu,\nu)\right\}}
    \end{equation}
    where the set of joint distributions $\Pi(\mu,\nu)$ is the so-called transport plans.
\end{definition}

The KP problem has a unique optimal solution when the cost function $c(U,V)$ is strictly convex. However, if the cost function is only convex, 
the uniqueness of the optimal solution cannot be guaranteed anymore~\cite{Santambrogio}. 

For a random variable $U$, we denote the CDF by $F_U(u)\triangleq P\{U\leq u\}$, and its inverse CDF by $F_U^{-1}(q)\triangleq\inf\{u:F_U(u)\geq q\}$, where $q\in[0,1]$ represents quantiles. Proposition 2.17 and Remark 2.11 in \cite{Santambrogio} states that the optimal transport cost of the KP problem is as follows: 
\begin{proposition}\label{def:OT_cost}
    Let $h:\mathbb{R}\rightarrow\mathbb{R}$ be a strictly convex function, and let $F_U$ and $G_V$ be the CDFs of random variables $U$ and $V$, respectively. The KP for optimal transport between $F_U$ and $G_V$, with cost function $c(U,V)=h(U-V)$, admits the following optimal solution:
    \begin{equation}
        \begin{aligned}
            d^{OT}(F_U,G_V) &\triangleq \inf_{U,V}~ h(U - V) \\
            &= \int_0^1 h(F^{-1}_U(q) - G^{-1}_V(q)) \, dq
        \end{aligned}
    \end{equation}
\end{proposition}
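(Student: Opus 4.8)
The plan is to prove the identity by exhibiting the optimizer of the Kantorovich problem explicitly as the comonotone (quantile) coupling and then showing that no other transport plan achieves a lower cost. This is the one-dimensional characterization of optimal transport for convex costs, so the argument splits into two parts: producing a feasible plan whose cost equals the claimed integral, and proving that this plan is a minimizer of $K(\Gamma)$ over $\Pi(F_U,G_V)$.

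First I would construct the candidate plan. Let $Q$ be uniform on $[0,1]$ and set $(U^*,V^*) = (F_U^{-1}(Q),\, G_V^{-1}(Q))$. The standard property of the quantile transform gives $F_U^{-1}(Q)\sim F_U$ and $G_V^{-1}(Q)\sim G_V$, so the law $\Gamma^*$ of $(U^*,V^*)$ has the prescribed marginals and hence $\Gamma^*\in\Pi(F_U,G_V)$. Its cost is
\[
K(\Gamma^*) = \mathbb{E}\big[h(U^*-V^*)\big] = \int_0^1 h\big(F_U^{-1}(q)-G_V^{-1}(q)\big)\,dq,
\]
which is precisely the right-hand side of the claim. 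It then remains to show that $K(\Gamma)\ge K(\Gamma^*)$ for every competing plan $\Gamma\in\Pi(F_U,G_V)$.

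For optimality I would use an uncrossing (rearrangement) argument built on a single convexity inequality. Strict convexity of $h$ yields, for any $u_1<u_2$ and $v_1>v_2$,
\[
h(u_1-v_2)+h(u_2-v_1) < h(u_1-v_1)+h(u_2-v_2),
\]
because the two arguments on the left have the same sum as those on the right but a strictly smaller spread, so the Hardy--Littlewood--P\'olya majorization principle forces the inequality. Probabilistically this says that if an optimal plan ever placed mass on two ``crossing'' pairs $(u_1,v_1)$ and $(u_2,v_2)$, reassigning the partners to $(u_1,v_2)$ and $(u_2,v_1)$ would strictly reduce the cost. Hence the support of any optimizer must lie along a non-decreasing graph; since the comonotone coupling with fixed marginals is unique by Hoeffding--Fr\'echet, the optimizer coincides with $\Gamma^*$, and strict convexity further guarantees this minimizer is unique.

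The hard part will be making this uncrossing intuition rigorous in the continuous setting rather than merely for finitely supported plans. Instead of swapping two atoms, one argues that the support of an optimal plan is $c$-cyclically monotone, and that for the strictly convex cost $c(u,v)=h(u-v)$ this support monotonicity is equivalent to containment in the graph of a non-decreasing map. The technical points requiring care are the existence of a minimizer, obtained from weak compactness of $\Pi(F_U,G_V)$ together with lower semicontinuity of $\Gamma\mapsto K(\Gamma)$, and the measurable-selection step that converts the monotone support into the quantile coupling. These are exactly the ingredients packaged in Proposition 2.17 and Remark 2.11 of \cite{Santambrogio}, which I would invoke to complete the argument.
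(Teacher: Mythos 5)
The paper does not prove this proposition at all: it is imported verbatim from the literature (Proposition 2.17 and Remark 2.11 of \cite{Santambrogio}), so there is no in-paper argument to compare against. Your sketch is the standard and correct route to that result: the quantile coupling $(F_U^{-1}(Q),G_V^{-1}(Q))$ is feasible and realizes the claimed integral, and the convexity inequality $h(u_1-v_2)+h(u_2-v_1)< h(u_1-v_1)+h(u_2-v_2)$ for $u_1<u_2$, $v_1>v_2$ (equal sums, strictly smaller spread) is exactly the two-point uncrossing that, promoted to $c$-cyclical monotonicity of the support of a minimizer, forces the optimal plan onto a nondecreasing graph and hence onto the comonotone coupling; strict convexity then gives uniqueness, while mere convexity would still give the quantile coupling as \emph{an} optimizer, which is all the equality requires. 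You correctly identify the genuinely technical steps (existence of a minimizer via weak compactness of $\Pi(\mu,\nu)$ and lower semicontinuity, and handling atoms when converting a monotone support into the quantile map). The only blemish is the closing sentence: you propose to ``invoke Proposition 2.17 and Remark 2.11'' to finish, which is circular since that is precisely the statement being proved --- what you actually need to cite (or prove) there is the general fact that optimal plans are concentrated on $c$-cyclically monotone sets, plus the elementary lemma that a monotone set with these marginals is the support of the comonotone coupling. With that substitution the argument is complete and self-contained, which is more than the paper itself provides.
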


The Wasserstein metric is a popular special case of Proposition \ref{def:OT_cost} where the convex cost function is chosen as $h(U-V)=\|U-V\|_p$, where $\|\cdot\|_p$ represents the $p$-norm. Hence, the $p$-Wasserstein distance $d_p$ can be given as \cite{Bellemare}:
\begin{equation}
    \begin{aligned}
        d_p &= \inf_{U,V}||U-V||_p
        = \left(\int_0^1|F_U^{-1}(q)-G_V^{-1}(q)|^p dq\right)^{1/p}
    \end{aligned}
\end{equation}

\subsection{Distributional Reinforcement Learning}

In traditional RL, the value function is typically defined as the expected total return, where the expectation is taken over both the policy and the transition dynamics. However, this approach only captures the mean of the return, failing to account for the inherent stochasticity in most MDPs, making the system more vulnerable to noise. To address this limitation, distributional RL seeks to model the entire distribution of returns. Specifically, the value distribution is defined as a mapping $G: \mathcal{S} \to \mathcal{G}$, where $\mathcal{G}$ represents the space of possible return distributions.
\begin{eqnarray}
    {G(s_t) \triangleq \left[\sum_{k=0}^{\infty} \gamma^k r(s_{t+k}, a_{t+k}) \Big{\vert} s_t\right]}
\end{eqnarray}
where {$a_k \sim {\pi}(\cdot \mid s_k)$, $s_{k+1} \sim P(\cdot \mid s_k, a_k)$, and $s_0 = s_t$.} Comparing with the traditional value function, it is clear that: 
\begin{equation}
    V(s_t) = \mathbb{E}_{{{\pi}, P}}[G(s_t)]
\end{equation}


The Bellman operator for the value distribution $G(s_t)$, denoted by $\mathcal{T}:\mathcal{G}\rightarrow\mathcal{G}$, is mathematically defined as~\cite{Li}:
\begin{equation}\label{eq:G_dis_bellman}
    \mathcal{T} G(s_t) \triangleq r(s_t,A_t) + \gamma G(S_{t+1})
\end{equation}
where $A_t \sim \pi(\cdot|s_t)$, $S_{t+1}\sim P(\cdot|s_t,A_t)$. Finally, the supremum of the $p$-Wasserstein metric between value distribution and the Bellman-updated distribution is:
\begin{equation}
    \bar{d_p}(G(s_t),\mathcal{T}G(s_t)) \triangleq \sup_{s\in\mathcal{S}}d_p(G(s_t),\mathcal{T}G(s_{t}))
\end{equation}
The Bellman operator for the state value distribution is a $\gamma$-contraction under this metric $\bar{d}_p$, which ensures the convergence of the value distribution during learning~\cite{Li}.







\section{Generalized Advantage Estimation for Distributional Policy Gradients} \label{sec:main_section}
\subsection{Wasserstein-like Directional Metric}


In the context of {optimal transport theory} \cite{Santambrogio}, Wasserstein distance can be interpreted as the amount of `mass' that needs to be transported between two distributions without specifying the direction of the transport. This is a limitation because it quantifies the distance between distributions but can not inherently provide a comparative assessment of which distribution is superior. However, in specific scenarios such as computing the advantage function, where we need to quantify how good or bad a particular action is compared to the policy's action, it is essential to determine the superiority between distributions along with the distance between them. This raises the question:  What defines the superiority of one distribution over another? As previously mentioned, the concept of comparing distributions is not straightforward and can be interpreted in different ways depending on the context. In the domain of distributional RL, when comparing value distributions, it is intuitive to consider a distribution with more mass concentrated toward higher values as superior. Thus, in this work, we define the CDF $F_U$ of a random variable $U$ as superior to the CDF $G_V$ of a random variable $V$ if the net mass transfer required to transform $F_U$ into $G_V$ is in a negative direction, and vice versa.\footnote{In this work, we define superiority based on the net mass flow direction; however, future work will incorporate the uncertainty present in the distributions.} 

Now that we have defined the term ``superiority" in the context of our work and the need for a metric that can define this superiority between distributions, we are ready to define our proposed Wasserstein-like directional metric. In contrast to the traditional Wasserstein metric, which uses a convex cost function $h: \mathbb{R} \rightarrow \mathbb{R}^+$ given by $h(U-V) = \|U - V\|_p$, we propose to adopt a linear cost function $h: \mathbb{R}\times\mathbb{R} \rightarrow \mathbb{R}$ that satisfies the convexity property given as
\begin{equation} \label{eq:linear_convex_cost}
    h(x, y) \triangleq L(x - y)
\end{equation} 
where $L$ is a linear function. Under a linear cost function, every transport plan is guaranteed to achieve optimality, provided the underlying probability measures have compact support~\cite{Santambrogio}. 

Consequently, we define the Wasserstein-like directional metric $d$ as follows:
\begin{definition}\label{def:Wasserstein_like_metric}
Let $h:\mathbb{R}\rightarrow\mathbb{R}$ be a linear convex function given by $h(x-y) = L(x-y)$, and $F_U$ and $G_V$ be the CDFs of the random variables $U, V$, respectively, where $U$ and $V$ have compact supports. The Wasserstein-like directional metric $d$ is defined as follows:
    \begin{equation} \label{eq:Wasserstein_like_metric}
    \begin{aligned}
        d(F_U,G_V) &\triangleq \inf_{U,V}{(U-V)}\\
        &= \int_0^1 L(F^{-1}_U(q) - G^{-1}_V(q)) dq
    \end{aligned}
\end{equation}
\end{definition}

It is important to emphasize that our directional metric does not measure shape similarity between distributions. As a result, two distributions with similar means but different variances can produce a low value under our directional metric.
Let us understand the above definition with an illustration. Consider two 1D optimal transport scenarios as depicted in Fig. \ref{fig:1-D-OT-illustration}. Here, we aim to quantify the discrepancy between the CDF $F_U(x)$ of a random variable $U$ and the target CDF $G_V(x)$ of a random variable $V$, both defined over the interval $x \in [x_{\min}, x_{\max}]$. 
In both Fig. \ref{fig:1-D-OT-illustration}(a) and \ref{fig:1-D-OT-illustration}(b), although the mass transfer occurs in opposite directions—positive in Fig. \ref{fig:1-D-OT-illustration}(a) and negative in Fig. \ref{fig:1-D-OT-illustration}(b)—the Wasserstein distance assigns the same value in both cases. This metric, therefore, offers no indication of which distribution is preferable. In contrast, the proposed Wasserstein-like directional metric considers the direction of the net flow of mass between distributions, providing insight into their relative superiority. Specifically, in Fig. \ref{fig:1-D-OT-illustration}(a), the net mass transfer from $F_U$ to $G_V$ occurs in the positive direction, signifying that the target distribution $G_V$ is superior. Conversely, in Fig. \ref{fig:1-D-OT-illustration}(b), the net flow of mass is negative, indicating that $F_U$ is the superior distribution. By integrating directional mass flow, this new metric addresses a key shortcoming of the classical Wasserstein distance, offering a more nuanced comparison between distributions.



\begin{figure}[htbp]
    \centering
    \includegraphics[width=0.45\textwidth, keepaspectratio]{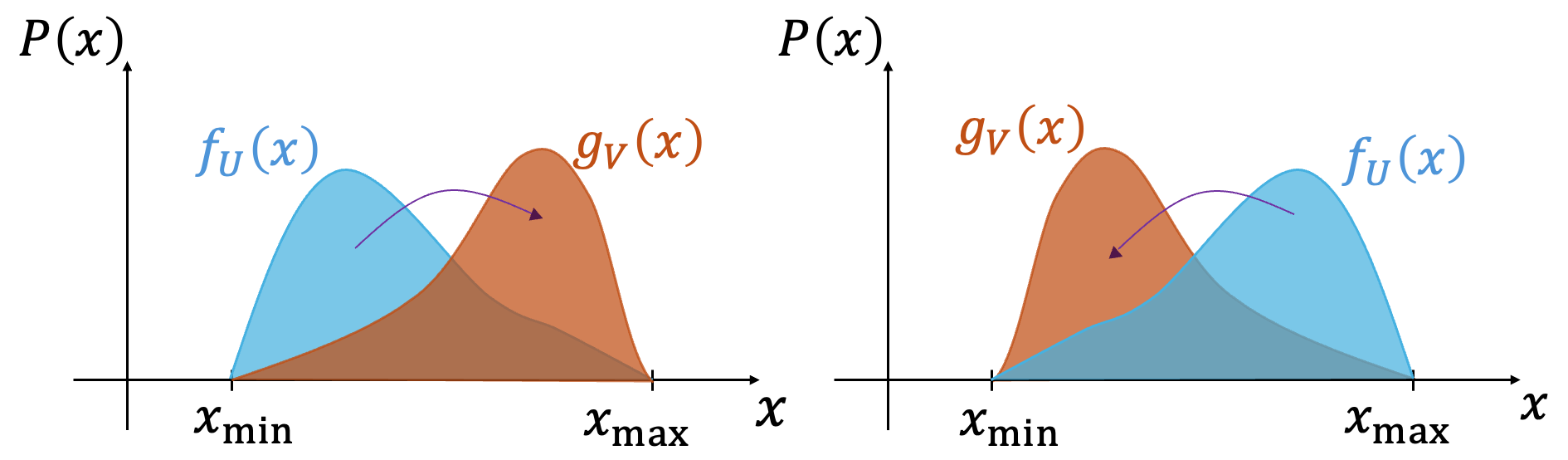}

    \hspace{0.5cm} (a) \hspace{3.5cm} (b)
    
    \caption{Probability density functions $f_U(x)$ and $g_V(x)$ of the two CDF functions $F_U(x)$ and $G_V(x)$ for the random variables $U$ and $V$, respectively, defined on $x \in [x_{\min}, x_{\max}]$. The purple arrow denotes the transfer plan from $F_U$ to $G_V$.}
    \label{fig:1-D-OT-illustration}
\end{figure}

\subsection{Advantage Estimation using Wasserstein-like Directional Metric}


In this section, we will use the proposed Wasserstein-like directional metric (\ref{eq:Wasserstein_like_metric}) to develop a low-bias low-variance {distributional} advantage estimate $\hat{\mathpzc{A}}$ of the advantage function $\mathpzc{A}^{\pi,\gamma}$. We first define the {discounted} distributional TD error $\delta^G(s_t,a_t)$ utilizing the proposed Wasserstein-like directional metric as follows:
\begin{definition}\label{def:distributional_TD_error}
Let $G(\cdot)$ be the value distribution, and $r(\cdot)$ be the reward function. For a given state-action pair $(s_t,a_t)$, the distributional TD error is defined as:
    \begin{equation}\label{eq:distributional_TD_error}
    \begin{aligned}
        \delta^G(s_t,a_t)&\triangleq d\left(r(s_{t},a_{t})+\gamma G(S_{t+1}), G(s_{t})\right)\\
        &= r(s_{t},a_{t})+ d\left(\gamma G(S_{t+1}), G(s_{t})\right)
    \end{aligned}
    \end{equation}
\end{definition}


{Before deriving the distributional GAE (DGAE), we will prove a key lemma that will come in handy later.}

\begin{lemma}\label{lem:lemma1}
    Let $U$ be a random variable with its inverse CDF denoted by $F^{-1}_U$, and let $\eta \in \mathbb{R}$ be a constant scalar. Then, the inverse CDF of the random variable $\eta U$ is given by $\eta F^{-1}_U$. Mathematically, it is $F^{-1}_{\eta U} = \eta F^{-1}_U$.
\end{lemma}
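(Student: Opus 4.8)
The plan is to work directly from the definition of the quantile function given in the preliminaries, namely $F_U^{-1}(q)\triangleq\inf\{u:F_U(u)\geq q\}$, and to relate the CDF of the scaled variable $\eta U$ back to the CDF of $U$. First I would express $F_{\eta U}$ in terms of $F_U$. For $\eta>0$ the event $\{\eta U\leq x\}$ coincides with $\{U\leq x/\eta\}$, so that $F_{\eta U}(x)=P(\eta U\leq x)=P(U\leq x/\eta)=F_U(x/\eta)$. This identity is the crux: it converts a statement about the scaled variable into one about $U$ evaluated at a rescaled argument.

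Next I would substitute this relation into the definition of the inverse CDF, obtaining $F_{\eta U}^{-1}(q)=\inf\{x:F_{\eta U}(x)\geq q\}=\inf\{x:F_U(x/\eta)\geq q\}$. Performing the change of variable $u=x/\eta$, equivalently $x=\eta u$, and using that multiplication by the positive constant $\eta$ is a strictly increasing bijection of $\mathbb{R}$ that scales infima by the factor $\eta$, I would pull $\eta$ outside the infimum: $\inf\{\eta u:F_U(u)\geq q\}=\eta\,\inf\{u:F_U(u)\geq q\}=\eta F_U^{-1}(q)$. Reading off the result for every quantile $q\in[0,1]$ then gives the claimed identity $F^{-1}_{\eta U}=\eta F^{-1}_U$.

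The step I would be most careful about is the sign of $\eta$, which is the only real obstacle. For $\eta>0$—precisely the regime needed later, where $\eta=\gamma\in(0,1)$ is used to justify $F^{-1}_{\gamma G(S_{t+1})}=\gamma F^{-1}_{G(S_{t+1})}$—the argument above goes through cleanly, and the degenerate case $\eta=0$ is immediate since $\eta U\equiv 0$ forces $F^{-1}_{\eta U}\equiv 0=\eta F^{-1}_U$. For $\eta<0$, however, multiplication reverses inequalities, so $\{\eta U\leq x\}=\{U\geq x/\eta\}$ and the clean factorization fails, producing instead a reflected quantile of the form $\eta F^{-1}_U(1-q)$. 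I would therefore either state the lemma for $\eta\geq 0$, which suffices for every subsequent use in the paper, or explicitly note the reflection that arises for negative scalars so that the identity as written is not misapplied.
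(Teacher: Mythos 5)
Your proposal is correct and starts from the same key identity as the paper, namely $F_{\eta U}(x)=F_U(x/\eta)$, but it then diverges in a way worth noting. The paper's proof composes this identity with the inverse, writing $F_V\bigl(F_V^{-1}(q)\bigr)=F_U\bigl(F_V^{-1}(q)/\eta\bigr)=q$ and then ``reading off'' $F_V^{-1}(q)=\eta F_U^{-1}(q)$; this tacitly assumes $F_U$ is a genuine bijection so that $F_U(u)=q$ pins down $u=F_U^{-1}(q)$. You instead manipulate the generalized inverse $\inf\{u:F_U(u)\geq q\}$ directly and pull the positive factor $\eta$ out of the infimum, which is the cleaner route when the CDF may have flats or jumps, and it matches the definition of $F_U^{-1}$ actually given in the paper's preliminaries. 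More importantly, you correctly flag that the statement as written, for arbitrary $\eta\in\mathbb{R}$, is false: for $\eta<0$ the event $\{\eta U\leq x\}$ becomes $\{U\geq x/\eta\}$ and one obtains a reflected quantile of the form $\eta F_U^{-1}(1-q)$ rather than $\eta F_U^{-1}(q)$. The paper's proof silently divides the inequality by $\eta$ without reversing it, so it only establishes the claim for $\eta>0$ — which is all that is needed downstream, where $\eta=\gamma^n\in(0,1)$ — but your suggestion to restrict the hypothesis to $\eta\geq 0$ (or to state the reflection explicitly) is the right fix. In short: same skeleton, but your version is both more rigorous about the inverse and more honest about the sign restriction.
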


\begin{proof}
Define a new random variable $V = \eta U$. By the definition of the CDF, we have
$F_V(v) = P(V \leq v)
    = P(\eta U \leq v)
    = F_U\left(\frac{v}{\eta}\right).$ Now, recall that for a uniformly distributed variable $q \in [0,1]$, the following holds 
$F_V(F^{-1}_V(q)) =
    F_U \left( \frac{F^{-1}_V(q)}{\eta} \right) = q.$ Consequently, we deduce that
$F^{-1}_V(q) = \eta F^{-1}_U(q).$
Thus, the inverse CDF of the random variable $\eta U$ is $\eta F^{-1}_U$.
\end{proof}




We are now ready to derive the low-bias, low-variance {DGAE} for distributional policy gradient algorithms. If the value distribution $G^{\pi,\gamma}$ is known, the {discounted} distributional TD error $\delta^G(s_t,a_t)$ in Def. \ref{def:distributional_TD_error} will be a {valid}, {unbiased} estimator of the advantage $\mathpzc{A}^{\gamma,\pi}$. However, in most practical scenarios, the true value distribution is unknown and needs to be approximated as $\hat{G}$. In such cases, the bias in the value estimator $\hat{G}$ is transferred to the advantage function, resulting in biased policy gradient estimates \cite{Schulman}.


The bias in the estimator can be reduced by leveraging the observed future rewards and discounting the value distribution in the Bellman update. Hence, we can define a low-bias $n$-step advantage estimator utilizing the known $n$-step trajectory reward sampled using the current policy as follows:
\begin{definition}\label{def:k_step_estimator}
Given a value distribution approximator $\hat{G}$ and an $n$-step trajectory $\tau_{t,n-1}=[s_t, a_t, s_{t+1}, a_{t+1},..., s_{t+n-1}, a_{t+n-1}]$ sampled using the current policy being improved, the $n$-step advantage estimator under the Wasserstein-like directional metric is defined as:
    \begin{equation}\label{eq:k_step_estimator}
    \begin{aligned}
        \hat{\mathpzc{A}}^{(n)}(s_t,a_t) &\triangleq d\left(r(\tau_{t,n-1}) + \gamma^n \hat{G}(S_{t+n}), \hat{G}(s_t) \right) \\
        &= r(\tau_{t,n-1}) + d \left(\gamma^n \hat{G}(S_{t+n}), \hat{G}(s_t) \right)
    \end{aligned}
    \end{equation}
where $r(t,\tau_{n-1}) = \sum_{k=0}^{n-1}\gamma^k r(s_{t+k},a_{t+k})$. The bias in the $n$-step advantage estimator $\hat{\mathpzc{A}}^{(n)}(s_t,a_t)$ gets smaller as $n\rightarrow\infty$ because the term $\gamma^n\hat{G}$ gets heavily discounted.
\end{definition}

\begin{theorem}
    The DGAE $\hat{\mathpzc{A}}^{\gamma,\lambda}_{\text{DGAE}}(s_t,a_t)$ for distributional policy gradient algorithms is defined as the exponentially weighted average of the $n$-step estimators and is given by:
    \begin{align}\label{eq:gae_final}
        \hat{\mathpzc{A}}^{\gamma,\lambda}_{\text{DGAE}}(s_t,a_t) = \sum_{k=0}^{\infty} (\gamma\lambda)^k \delta^{\hat{G}}(s_{t+k},a_{t+k}) 
    \end{align}
    where $\delta^{\hat{G}}(s_{t+k},a_{t+k})$ follows Eq. (\ref{eq:distributional_TD_error}) as the estimated distributional TD error, $\gamma\in[0,1]$ is the discounting factor, and $\lambda\in(0,1)$ is a control parameter. Both $\gamma$ and $\lambda$ are used to trade-off between bias and variance.
\end{theorem}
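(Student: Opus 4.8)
The plan is to follow the structure of the classical GAE derivation \cite{Schulman} and adapt it to the distributional setting, where the linearity of the cost function $L$ in Definition~\ref{def:Wasserstein_like_metric} does the heavy lifting. The statement interprets the DGAE as the exponentially weighted average of the $n$-step estimators,
\[
\hat{\mathpzc{A}}^{\gamma,\lambda}_{\text{DGAE}}(s_t,a_t) = (1-\lambda)\sum_{n=1}^{\infty}\lambda^{n-1}\hat{\mathpzc{A}}^{(n)}(s_t,a_t),
\]
and the task is to show that this collapses to the TD-error series of Eq.~(\ref{eq:gae_final}).

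The first and central step is to prove a telescoping identity relating the $n$-step estimator of Definition~\ref{def:k_step_estimator} to the one-step distributional TD errors of Definition~\ref{def:distributional_TD_error}, namely
\[
\hat{\mathpzc{A}}^{(n)}(s_t,a_t) = \sum_{l=0}^{n-1}\gamma^l\,\delta^{\hat{G}}(s_{t+l},a_{t+l}).
\]
To establish this I would unfold $\delta^{\hat{G}}(s_{t+l},a_{t+l})$ using the integral form of the metric $d$ and apply Lemma~\ref{lem:lemma1} to pull the discount factor out of the inverse CDF, writing $F^{-1}_{\gamma\hat{G}(S_{t+l+1})}=\gamma F^{-1}_{\hat{G}(S_{t+l+1})}$. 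Because $L$ is linear, the metric is additive and the deterministic reward shifts out cleanly (exactly as already used in Definition~\ref{def:distributional_TD_error}), so each TD error reduces to $r(s_{t+l},a_{t+l}) + \gamma v_{l+1} - v_l$, where $v_l \triangleq \int_0^1 F^{-1}_{\hat{G}(s_{t+l})}(q)\,dq$ plays the role of a scalar baseline. Summing $\gamma^l\,\delta^{\hat{G}}(s_{t+l},a_{t+l})$ over $l=0,\dots,n-1$ then telescopes the $\gamma^l v_l$ terms to leave $r(\tau_{t,n-1}) + \gamma^n v_n - v_0$, which matches $\hat{\mathpzc{A}}^{(n)}$ once the same Lemma~\ref{lem:lemma1} manipulation is applied to the $\gamma^n\hat{G}(S_{t+n})$ term in Eq.~(\ref{eq:k_step_estimator}).

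The second step is purely algebraic: I would substitute the telescoping identity into the weighted average, interchange the order of the two summations (over $n$ and over $l$), and evaluate the resulting geometric series $\sum_{n=l+1}^{\infty}\lambda^{n-1} = \lambda^l/(1-\lambda)$. The factor $(1-\lambda)$ then cancels, collapsing the double sum to $\sum_{l=0}^{\infty}(\gamma\lambda)^l\,\delta^{\hat{G}}(s_{t+l},a_{t+l})$, which is precisely Eq.~(\ref{eq:gae_final}).

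I expect the main obstacle to be rigorously justifying the telescoping rather than the closing summation. The delicate point is that the metric inside $\delta^{\hat{G}}$ compares the Bellman-updated distribution built from the \emph{random} next state $S_{t+l+1}$ against the value distribution at the sampled state $s_{t+l}$, so cancellation of the intermediate baseline terms $\gamma^{l+1} v_{l+1}$ requires identifying the random next-state quantity at step $l$ with the sampled-state quantity at step $l+1$ along the trajectory $\tau_{t,n-1}$. I would also verify convergence of the infinite series before swapping the order of summation, which is guaranteed by $\gamma\lambda<1$ together with the bounded rewards and compact supports assumed in the preliminaries.
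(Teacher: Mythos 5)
Your proposal follows essentially the same route as the paper's proof: establish the telescoping identity $\hat{\mathpzc{A}}^{(n)}(s_t,a_t)=\sum_{l=0}^{n-1}\gamma^l\delta^{\hat{G}}(s_{t+l},a_{t+l})$ via Lemma~\ref{lem:lemma1} and the linearity of $L$, then form the $(1-\lambda)$-weighted average, swap the order of summation, and sum the geometric series. If anything, your treatment is more explicit than the paper's (which compresses the telescoping into ``strategically adding/subtracting intermediate quantile terms''), and your flagged concerns about identifying $S_{t+l+1}$ with the sampled $s_{t+l+1}$ and about justifying the interchange of sums are exactly the right points to be careful about.
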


\begin{proof}
Starting from Def. \ref{def:k_step_estimator} and using the Wasserstein-like directional metric (Def. \ref{def:Wasserstein_like_metric}), we begin with:
\begin{align*}
\hat{\mathpzc{A}}^{(n)}(s_t,a_t) &= \sum_{k=0}^{n-1}\gamma^k r(s_{t+k},a_{t+k}) \\
&+ \int_0^1 L\big(\gamma^n F^{-1}_{\hat{G}(S_{t+n})}(q) - F^{-1}_{\hat{G}(s_t)}(q)\big) dq
\end{align*}
Applying Lemma \ref{lem:lemma1} and strategically adding/subtracting intermediate quantile terms yields:
\begin{align}
\hat{\mathpzc{A}}^{(n)}(s_t,a_t) &= \sum_{k=0}^{n-1}\gamma^k r(s_{t+k},a_{t+k})\nonumber\\
        &\hspace{10mm}+d\bigg(\gamma \hat{G}(S_{t+k+1}),\hat{G}(S_{t+k})\bigg)\nonumber\\
        &= \sum_{k=0}^{n-1} \gamma^k \delta^{\hat{G}}(s_{t+k},a_{t+k})
        \label{eq:n_step_estimator}
\end{align}
We define DGAE $\hat{\mathpzc{A}}^{\gamma,\lambda}_{\text{DGAE}}$ as the exponentially-weighted average of the $n$-step advantage estimators (\ref{eq:n_step_estimator}):
\begin{align*}
    \hat{\mathpzc{A}}^{\gamma,\lambda}_{\text{DGAE}}(s_t,a_t) &\triangleq (1-\lambda)\left(\sum_{k=1}^{\infty}\lambda^{k-1}\hat{\mathpzc{A}}^{(k)}(s_t,a_t)\right)\nonumber\\
    &= (1-\lambda)\left(\sum_{k=0}^{\infty}\gamma^{k}\delta^G(s_{t+k},a_{t+k})\left(\frac{\lambda^k}{1-\lambda}\right)\right)\\
    &= \sum_{k=0}^{\infty} (\gamma\lambda)^k \delta^{\hat{G}}(s_{t+k},a_{t+k}) 
\end{align*}
This concludes the proof
\end{proof}

Eq. (\ref{eq:gae_final}) shows that the {distributional} GAE {for distributional policy gradient methods}
follows a straightforward formulation under the proposed Wasserstein-like directional metric extending GAE to settings with value distributions. 
The parameters $\gamma$ and $\lambda$ 
influence the bias-variance trade-off when using an approximate value distribution: $\gamma\in(0,1)$ introduces bias irrespective of the value distribution accuracy. In contrast, $\lambda\in(0,1)$ introduces bias only when the approximation is inaccurate. Similar to the findings in \cite{Schulman}, the best value for $\lambda$ is always lower than that of $\gamma$.



Integrating {DGAE} with policy gradients is straightforward, naturally extending them to the distributional RL setting. Specifically, we replace $\psi_k$ in Eq. (\ref{eq:gradient_obj_fcn}) with our proposed DGAE $\hat{\mathpzc{A}}^{\gamma,\lambda}_{\text{DGAE}}$. To represent the state-value distribution, we use inverse CDFs, and the value network is trained by minimizing the quantile-Huber loss \cite{Dabney} between the value distribution $F^{-1}_G$ and its Bellman update $F^{-1}_{\mathcal{T} G}$. The quantile-Huber loss is given by:
\begin{equation}\label{eq:quantile_huber_loss}
    \rho^{\kappa}_{q}(u) = |q - \delta_{\{u<0\}}| \mathcal{L}_{\kappa}(u)
\end{equation}
where {$u=F^{-1}_{\mathcal{T} G}-F^{-1}_G$,} 
$\kappa$ is a constant, $\delta_{\{u<0\}}$ is the indicator function, and
\begin{equation}
    \mathcal{L}_{\kappa}(u) =
    \begin{cases}
        \frac{1}{2}u^2, & \text{if } |u| \leq \kappa, \\
        \kappa\left(|u| - \frac{1}{2}\kappa\right), & \text{otherwise}. \nonumber
    \end{cases}
\end{equation}

The overall approach is detailed in Algorithm \ref{alg:policy_value_update}.

\begin{algorithm}
\caption{Policy and Value Distribution Update}\label{alg:policy_value_update}

\begin{algorithmic}[1]
    \STATE Initialize policy network $\pi_{\theta_0}$ and value distribution network ${G}_{\phi_0}$, where $\theta$ and $\phi$ are the network parameters.
    \FOR{$i=0,1,2,...$}
        \STATE Sample $n$ time steps $\{s_t, a_t, r_t, s_{t+1}\}$ using the current policy $\pi_{\theta_i}$
        \STATE Compute $\delta^{G_{\phi_i}}(s_t,a_t)$ for all time steps
        \STATE Compute advantage estimates $\hat{\mathpzc{A}}^{\gamma, \lambda}_{\text{DGAE}}(s_t, a_t)$ for all time steps:
             \[
                 \hat{\mathpzc{A}}^{\gamma, \lambda}_{\text{DGAE}}(s_t,a_t) = \sum_{k=0}^{n-1} (\gamma \lambda)^k \delta^{G_{\phi_i}}(s_{t+k},a_{t+k})
             \] 
        \STATE Use $\hat{\mathpzc{A}}^{\gamma, \lambda}_{\text{DGAE}}$ to update policy parameters $\theta_i$ using policy gradient approach:
        \[
            \theta_{i+1} \leftarrow \theta_i + \alpha \nabla_{\theta_i} J(\pi_{\theta_i})
        \]
        where $\alpha{>0}$ is the learning rate.
        \STATE Compute value loss $\mathcal{L}_{G_{\phi_i}}$ using Eq. (\ref{eq:quantile_huber_loss}):
        \[
        \mathcal{L}_{G_{\phi_i}} = \sum_{q_i}\mathbb{E}_{q_j}[\rho^{\kappa}_{q_i}(\mathcal{T}G_{\phi_i}(s_{t},q_j)-G_{\phi_i}(s_t,q_i))]
        \] 
        \STATE Update value network parameters $\phi$ using gradient descent:
        \[
            \phi_{i+1} \leftarrow \phi_i - \alpha \nabla_{\phi} \mathcal{L}_{G_{\phi_i}}
        \]
    \ENDFOR
\end{algorithmic}
\end{algorithm}


\section{Simulation Results}\label{sec:results}
We evaluate our approach by integrating the proposed DGAE with proximal policy optimization (PPO) \cite{Schulman3}, trust-region policy optimization (TRPO) \cite{Schulman2}, and asynchronous actor-critic (A2C) \cite{Mnih}, extending them to distributional variants (DPPO, DTRPO, DA2C). We limit DGAE to on-policy policy gradient methods here, as widely adopted off-policy approaches such as the deep deterministic policy gradient (DDPG), twin delayed DDPG (TD3), and soft actor critic (SAC) bypass advantage computation, instead relying on Q-value estimates for policy updates. Although off-policy methods like the actor critic with experience replay (ACER) and importance weighted actor learner architecture (IMPALA) employ advantage functions, their performance often lags behind modern baselines in terms of stability and scalability.

Experiments are conducted on four OpenAI Gym MuJoCo environments: Ant, Hopper, and Swimmer, Walker2d (Fig. \ref{img:gym_env_images}). Each task requires learning continuous control policies to maximize forward velocity. The agent observes a state vector (joint positions/velocities) and outputs joint torques, with rewards reflecting movement efficiency.


\begin{figure}[htbp] 
    \centering
    \includegraphics[width=0.45\textwidth, keepaspectratio]{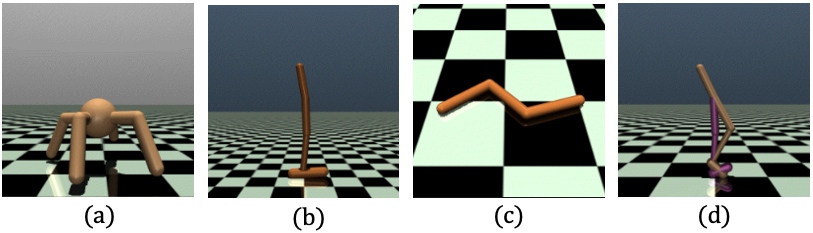}
    \caption{OpenAI gym robot models used for testing: (a) Ant-v3; (b) Hopper-v3; (c) Swimmer-v3; (d) Walker2d-v3}
    \label{img:gym_env_images}
\end{figure}

\subsubsection{Experimental Setup}
We trained baseline algorithms (PPO, TRPO, A2C) and their distributional variants (DPPO, DTRPO, DA2C) for $10^7$ timesteps on Ant and Hopper environments, and $5 \times 10^6$ timesteps on Swimmer and Walker2d. Each configuration used $5$ random seeds to ensure statistical robustness, with results (mean ± std. dev. of undiscounted returns) shown in Fig. \ref{fig:results}. To ensure equitable algorithmic comparisons, we systematically optimized the bias-variance tradeoff parameters ($\gamma$ and $\lambda$) for each method. Fig. \ref{fig:results} contrasts their performance under these individually tuned configurations.


We use fully connected networks with 2 hidden layers to represent the value and policy networks. For most environments, the hidden layers consist of 256 units. However, in the Ant environment, the increased complexity of the state-action space requires expanding the hidden layers to 512 units per layer for improved performance. The policy network outputs the Gaussian policy parameters, i.e., the mean and covariance. The value network output layers differ among implementations: baseline versions produce a single scalar value estimate, while distributed implementations output $64$ quantiles to facilitate distributional value representation. This ensures fair comparison by isolating the impact of distributional value estimation from other structural variations.

\subsubsection{Training Results}

\begin{figure*}[t]
    \centering
    \begin{subfigure}{0.24\textwidth}
        \centering
        \includegraphics[width=\textwidth]{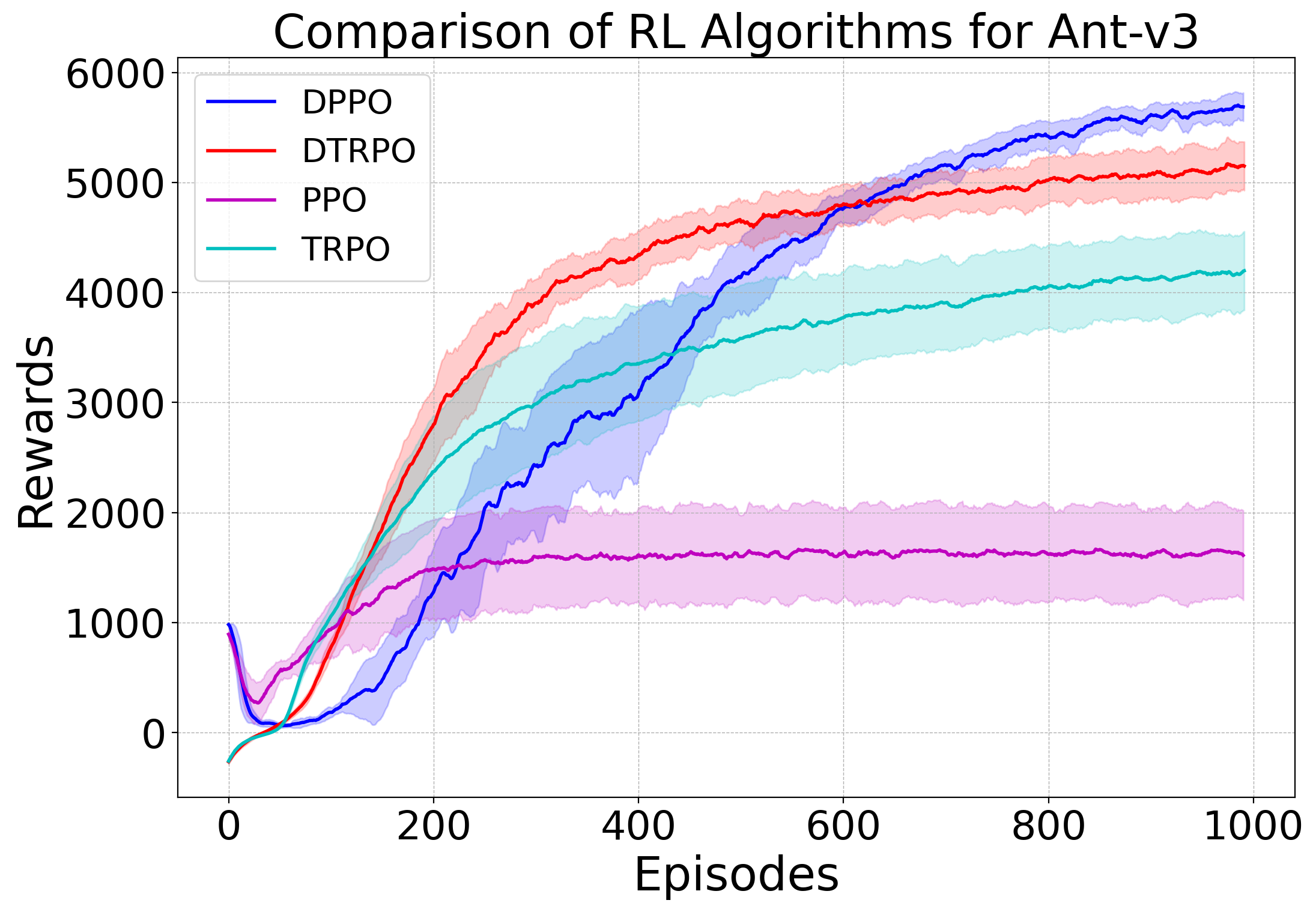}
        \caption{Ant-v3}
        \label{img:Ant_results}
    \end{subfigure}
    \hfill
    \begin{subfigure}{0.24\textwidth}
        \centering
        \includegraphics[width=\textwidth]{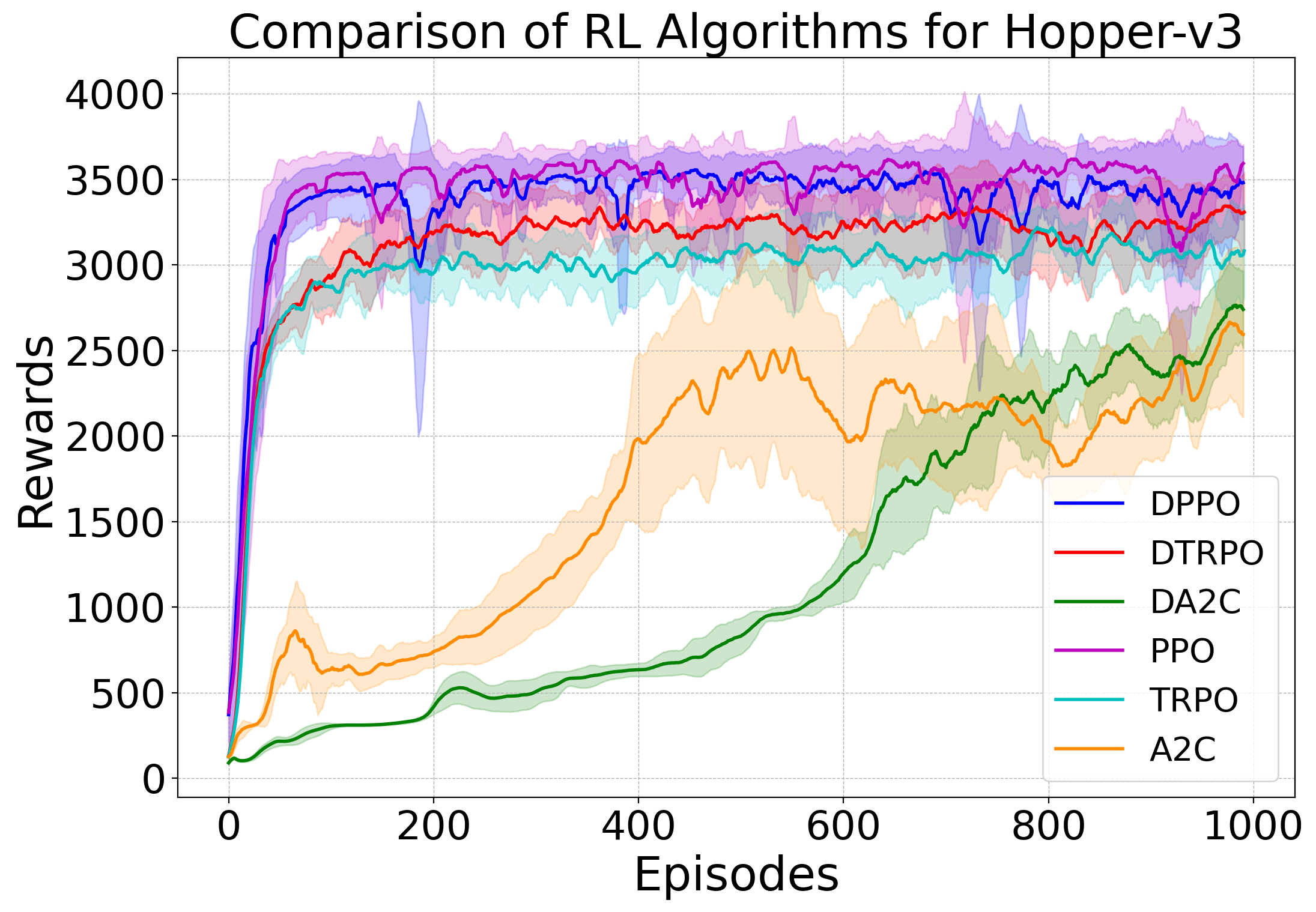}
        \caption{Hopper-v3}
        \label{img:Hopper_results}
    \end{subfigure}
    \hfill
    \begin{subfigure}{0.24\textwidth}
        \centering
        \includegraphics[width=\textwidth]{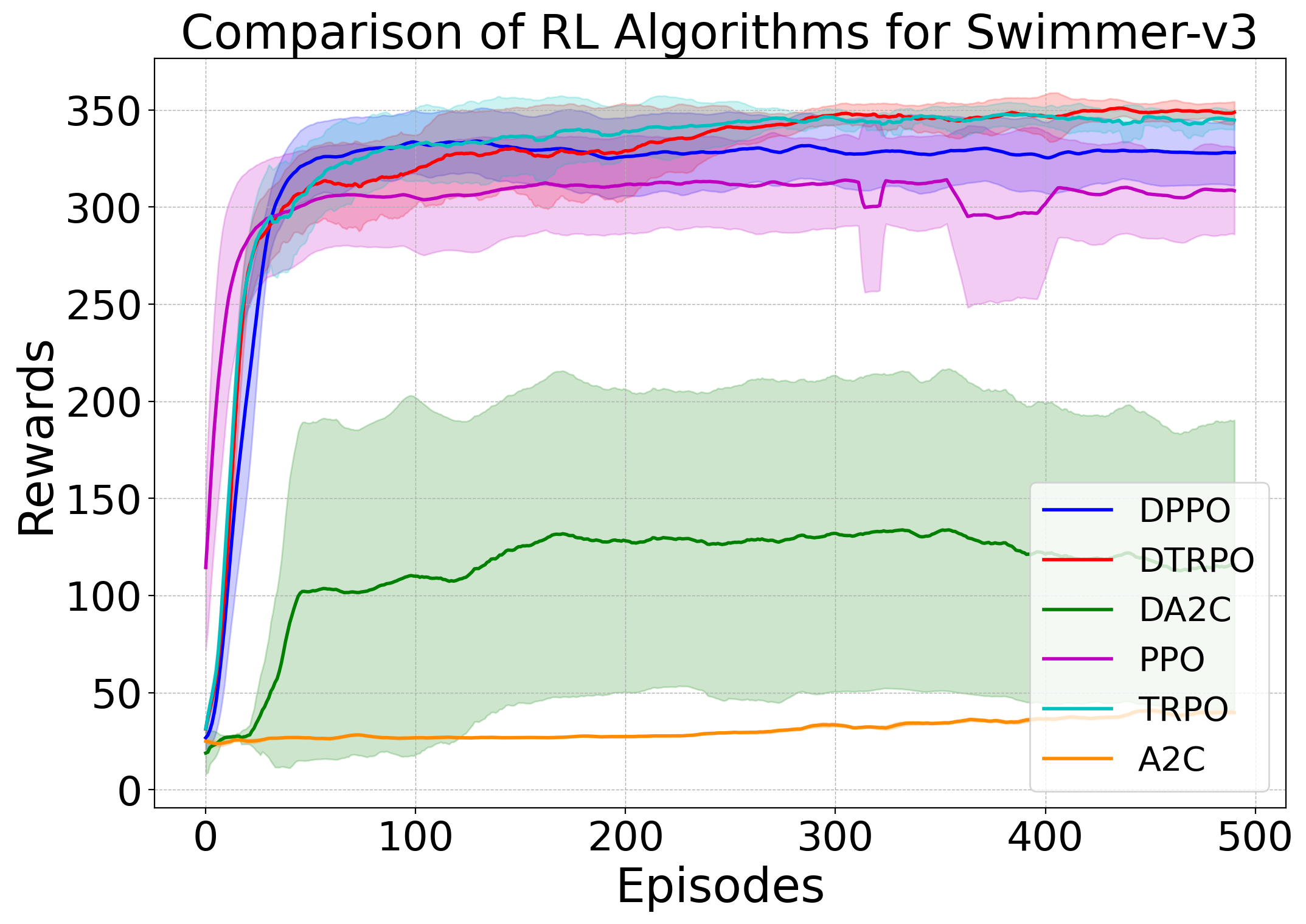}
        \caption{Swimmer-v3}
        \label{img:Swimmer_results}
    \end{subfigure}    
    \hfill
    \begin{subfigure}{0.24\textwidth}
        \centering
        \includegraphics[width=\textwidth]{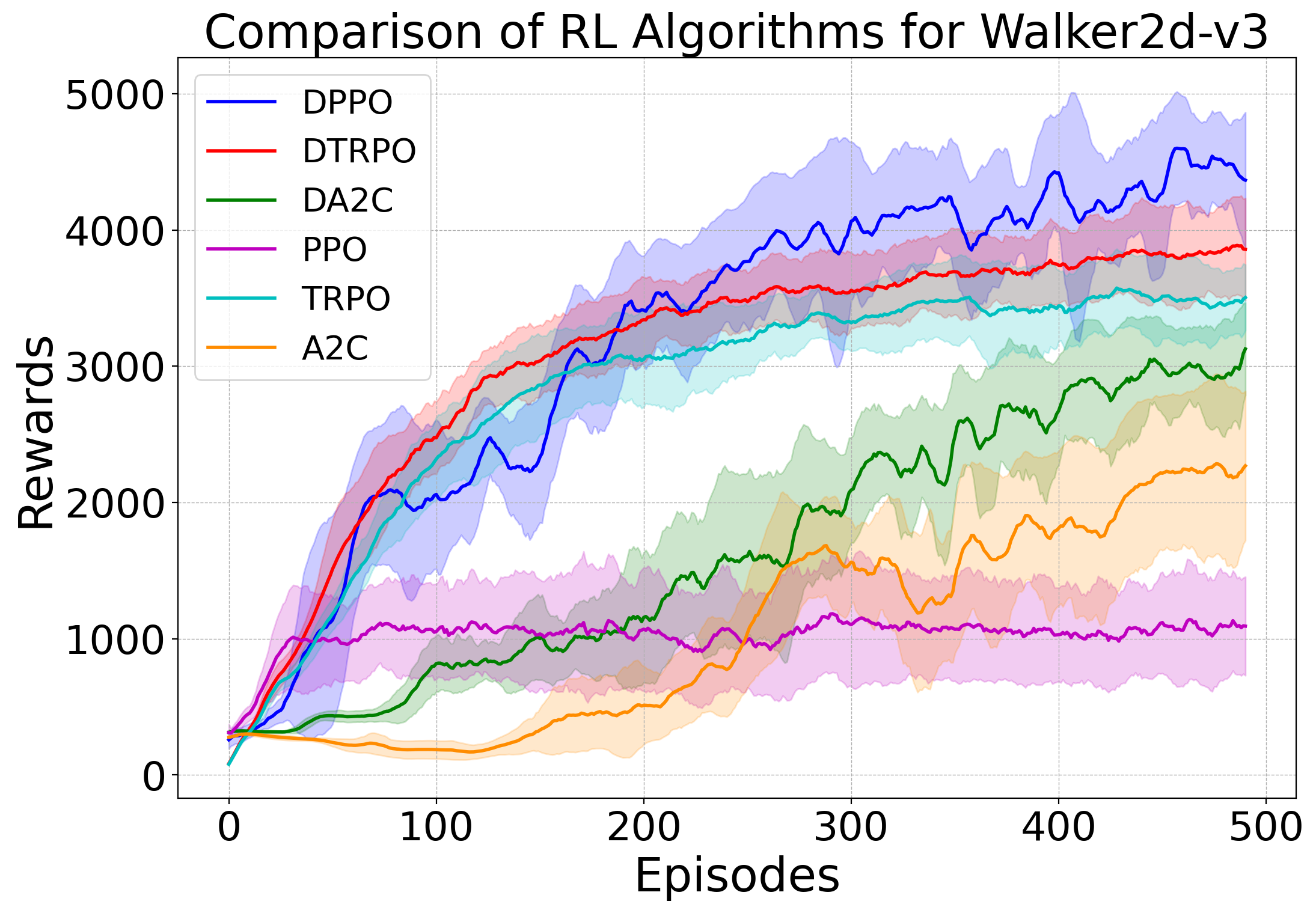}
        \caption{Walker2d-v3}
        \label{img:Walker_results}
    \end{subfigure}
    \caption{Learning curves on OpenAI gym environments with solid line representing the mean and shaded region representing the standard deviation of the undiscounted return across 5 training epochs.}
    \label{fig:results}
\end{figure*}

We evaluated the performance of the algorithms by plotting the mean and standard deviation of the undiscounted return across 5 training epochs, as shown in Fig. \ref{fig:results}. Fig. \ref{img:Ant_results} presents the results across algorithms in the Ant environment. The DPPO and PPO algorithms have a rollout buffer length set to 4,096 for the Ant environment, whereas the TRPO and DTRPO have a rollout buffer size of 50,000. In this setting, the DPPO algorithm significantly outperforms the baseline PPO, exhibiting a large performance gap. Similarly, DTRPO also exhibits a considerable performance improvement over the baseline TRPO. We were unable to achieve stable performance for A2C and DA2C in this environment. Notably, A2C and DA2C showed consistently poor performance across all environments. For the rest of the environments, the rollout buffer lengths for (D)PPO and (D)TRPO were set as 2048 and 25000, respectively. In the Hopper environment (Fig. \ref{img:Hopper_results}), both distributional variants, DPPO and DTRPO, surpass their baseline counterparts. DA2C exhibits a slight improvement over A2C, though the overall performance of both remains relatively weak compared to the other algorithms. Fig. \ref{img:Swimmer_results} shows the results for the Swimmer environment. Here, DTRPO and TRPO display very similar performance and outperform the other algorithms. DPPO performs better than PPO, and DA2C outperforms A2C. Lastly, Fig. \ref{img:Walker_results} presents the results for the Walker environment. DPPO achieves the best results by a significant margin, followed by DTRPO. Consistent with previous environments, DA2C outperforms A2C, but PPO fails to deliver competitive results in this particular environment. As mentioned previously, distributions with similar means yield low advantage values, regardless of their variances. Nevertheless, these scenarios rarely arise in practice. For example, in the Hopper environment, this situation occurred only in about $0.093\%$ of the instances.

Overall, integrated with our proposed DGAE algorithm, the distributional policy gradient algorithms outperform or at least perform on par with their traditional baselines. Since we used the same rollout length across the traditional and distributional variants, we can confirm that our approach has sampling efficiency similar to traditional baseline algorithms. Moreover, the parameters $\gamma, \lambda$ have similar effects on the bias-variance trade-off as in traditional GAE.

\section{Conclusion}\label{sec:conclusion}

In this work, we introduced a novel metric based on optimal transport theory, termed the Wasserstein-like directional metric, to quantify the distance and superiority between distributions. Leveraging this metric, we developed a DGAE tailored for distributional policy gradient algorithms. The DGAE seamlessly extends traditional policy gradient methods to the distributional RL setting by modifying the value functions into inverse CDFs.

Our results demonstrate that integrating the proposed DGAE with baseline policy gradient algorithms enhances their performance in most cases. Additionally, the sampling efficiency of DGAE, when combined with policy gradients, closely mirrors the sampling efficiency of traditional GAE used with baseline algorithms. We also observed that the bias-variance tradeoff parameters, $\gamma$ and $\lambda$, in DGAE behave similarly to their role in traditional GAE, maintaining a comparable impact on the tradeoff.


\bibliographystyle{plain} 
\bibliography{references.bib} 

\begin{thebibliography}{10}

\bibitem{Arjovsky}
Martin Arjovsky, Soumith Chintala, and L{\'e}on Bottou.
\newblock Wasserstein generative adversarial networks.
\newblock In {\em International conference on machine learning}, pages 214--223. PMLR, 2017.

\bibitem{Bellemare}
Marc~G Bellemare, Will Dabney, and R{\'e}mi Munos.
\newblock A distributional perspective on reinforcement learning.
\newblock In {\em International conference on machine learning}, pages 449--458. PMLR, 2017.

\bibitem{Dabney}
Will Dabney, Mark Rowland, Marc Bellemare, and R{\'e}mi Munos.
\newblock Distributional reinforcement learning with quantile regression.
\newblock In {\em Proceedings of the AAAI conference on artificial intelligence}, volume~32, 2018.

\bibitem{Haarnoja}
Tuomas Haarnoja, Aurick Zhou, Kristian Hartikainen, George Tucker, Sehoon Ha, Jie Tan, Vikash Kumar, Henry Zhu, Abhishek Gupta, Pieter Abbeel, et~al.
\newblock Soft actor-critic algorithms and applications.
\newblock {\em arXiv preprint arXiv:1812.05905}, 2018.

\bibitem{Li}
Luchen Li and A.~Aldo Faisal.
\newblock Bayesian distributional policy gradients, 2021.

\bibitem{Lillicrap}
Timothy~P Lillicrap, Jonathan~J Hunt, Alexander Pritzel, Nicolas Heess, Tom Erez, Yuval Tassa, David Silver, and Daan Wierstra.
\newblock Continuous control with deep reinforcement learning.
\newblock {\em arXiv preprint arXiv:1509.02971}, 2015.

\bibitem{Mnih}
Volodymyr Mnih, Adria~Puigdomenech Badia, Mehdi Mirza, Alex Graves, Timothy Lillicrap, Tim Harley, David Silver, and Koray Kavukcuoglu.
\newblock Asynchronous methods for deep reinforcement learning.
\newblock In {\em International conference on machine learning}, pages 1928--1937. PmLR, 2016.

\bibitem{Santambrogio}
Filippo Santambrogio.
\newblock Optimal transport for applied mathematicians.
\newblock 2015.

\bibitem{Schulman2}
John Schulman, Sergey Levine, Pieter Abbeel, Michael Jordan, and Philipp Moritz.
\newblock Trust region policy optimization.
\newblock In {\em International conference on machine learning}, pages 1889--1897. PMLR, 2015.

\bibitem{Schulman}
John Schulman, Philipp Moritz, Sergey Levine, Michael Jordan, and Pieter Abbeel.
\newblock High-dimensional continuous control using generalized advantage estimation.
\newblock {\em arXiv preprint arXiv:1506.02438}, 2015.

\bibitem{Schulman3}
John Schulman, Filip Wolski, Prafulla Dhariwal, Alec Radford, and Oleg Klimov.
\newblock Proximal policy optimization algorithms.
\newblock {\em arXiv preprint arXiv:1707.06347}, 2017.

\bibitem{Sutton}
Richard~S Sutton, Andrew~G Barto, et~al.
\newblock {\em Reinforcement learning: An introduction}, volume~1.
\newblock MIT press Cambridge, 1998.

\bibitem{Sutton2}
Richard~S Sutton, David McAllester, Satinder Singh, and Yishay Mansour.
\newblock Policy gradient methods for reinforcement learning with function approximation.
\newblock {\em Advances in neural information processing systems}, 12, 1999.

\end{thebibliography}

\end{document}